\documentclass{aamas2014}
\pdfoutput=1
\usepackage{hyperref}
\usepackage{url}
\usepackage{times}
\usepackage{helvet}
\usepackage{courier}
\usepackage{graphicx}
\usepackage{amsbsy}
\usepackage{booktabs}
\usepackage{color}
\makeatletter
\newif\if@restonecol
\makeatother

\usepackage[lined, ruled]{algorithm2e}
\usepackage{amsmath}
\usepackage{amssymb}
\usepackage{float}
\usepackage[override]{cmtt}
\usepackage{mathpartir} 



\newcommand \bluesemi {{\color{blue};}}

\newcommand \expeval[3] {{#1}\ \bluesemi\ {#2}\
  {\color{blue}\Downarrow}\ {#3}}



\newcommand \A \alpha


\title{A Programming Language With a POMDP Inside}
\numberofauthors{3}

\author{
%
\alignauthor
Christopher H. Lin\\
\affaddr{University of Washinton}\\
       \affaddr{Seattle, WA}\\
       \email{chrislin@cs.washington.edu}
\alignauthor
Mausam\\
\affaddr{Indian Institute of Technology}\\
\affaddr{Delhi, India}\\
\email{mausam@cse.iitd.ac.in}
\alignauthor
Daniel S. Weld\\
       \affaddr{University of Washington}\\
       \affaddr{Seattle, WA}\\
       \email{weld@cs.washington.edu}
}

\newcommand{\clowder}[0]{\textsc{Poaps}}

\newtheorem{thm}{Theorem}



\newcommand{\bi}{\begin{itemize}}
\newcommand{\ei}{\end{itemize}}
\newcommand{\BE}{\begin{enumerate}}
\newcommand{\EE}{\end{enumerate}}

\newcommand{\eg}{\mbox{\it e.g.}}

\newtheorem{theorem}{Theorem}
\newtheorem{lemma}[theorem]{Lemma}




\newcommand{\initab}{                           
\begin{tabbing}
XXX \= XXXX \= \kill
}
\newcommand{\begpub}{
\begin{quotation}
\noindent
}

\newcommand{\finpub}{
\end{quotation}
}

\hyphenation{non-de-ter-mi-nis-tic-al-ly non-de-ter-mi-nis-tic
exis-ten-tial-ly quan-tified se-lec-tion exis-ting in-stan-tiated
uni-vers-al-ly es-tab-lish in-con-sis-tent}

\begin{document}
\toappear{}
\maketitle
\begin{abstract}
We present \clowder, a novel planning system for defining Partially-Observable Markov Decision Processes (POMDPs) that abstracts away from POMDP details for the benefit of non-expert practitioners. \clowder\ includes an expressive adaptive programming language based on Lisp that has constructs for choice points that can be dynamically optimized. Non-experts can use our language to write adaptive programs that have partially observable components without needing to specify belief/hidden states or reason about probabilities. \clowder\ is also a compiler that defines and performs the transformation of any program written in our language into a POMDP with control knowledge. We demonstrate the generality and power of \clowder\ in the rapidly growing domain of human computation by describing its expressiveness and simplicity by writing several \clowder\ programs for common crowdsourcing tasks.  

\end{abstract}

\category{I.2}{Artificial Intelligence}{Programming Languages and Software}



\terms{Algorithms, Languages}


\keywords{POMDPs, Planning, Decision Theory, Adaptive Programming, Crowdsourcing}

\frenchspacing
\section{Introduction}
Although optimal decision-making is widely applicable across many aspects of human life, ultimately the ability to construct and use intelligent agents to make optimal decisions has been restricted to those who understand to some degree the theory of decision processes. Those who would greatly like access to such tools, like many in the crowdsourcing community \cite{weld-hcomp11} for example, must either resort to sub-optimal techniques that use approximate heuristics or hire a planning expert to formally define and solve their domain-specific problems.

This paper presents \clowder\ (Partially Observable Adaptive Programming System), which is a step toward bringing the power of decision theory to the masses. \clowder\ makes available the power of Partially-Observable Markov Decision Processes (POMDPs) to non-expert users through the interface of an adaptive programming language that provides an abstraction over POMDPs so that non-experts can write POMDPs without knowing anything about them. 

Like many previous approaches to proceduralizing decision processes\cite{andre-aaai02,boutilier-aaai00,pinto-icmla10}, we create a language that includes choice points, which allow the system to make optimal decisions adaptively. However, unlike previous approaches, we do not expect the programmer to explicitly reason about the state space or world dynamics. Such an expectation for our target users is impractical. In an informal experiment, we recruited scientists unfamiliar with artificial intelligence and introduced them to (PO)MDPs. We then asked them to define a state space for some simple crowdsourcing problems. However, all of them were unable to define a satisfactory state space, let alone an entire POMDP. In particular, they had trouble grasping the meaning and mathematical formalism of a POMDP hidden ``state.'' Therefore, the challenge is to create a language that can express all the details of POMDPs, yet hides these details from the programmer, but is still flexible enough to represent programs for a variety of scenarios. 

One of our key contributions is a division of work between experts and non-experts that achieves our desiderata. \clowder\ asks experts to define \emph{primitives}. A primitive consists of a function and a mathematical model of that function. The mathematical model describes some hidden state underlying the function and its dynamics. Non-expert programmers understand the primitives as procedures in terms of easily understood inputs and outputs and may not appreciate the hidden mathematical models. They can, however, compose the primitives into novel programs for their needs. \clowder\ then compiles their programs into completely new POMDPs.

For instance, suppose a user would like to write a program that uses crowdsourcing to label training data. An adaptive program that achieves this goal might be the following. For each datum, poll crowd workers for labels until the system is confident it can stop and return a label. For an adaptive program to make optimal decisions, it needs to both maintain some state that represents a current belief about what the correct label is, and know how to update this belief after every label observation. Instead of hiring a planning expert to handcraft a custom POMDP for this simple voting problem \cite{dai-aaai10, kamar-aamas12}, users, and in particular, non-experts, should be able to write a very simple program that abstracts away from state variables and probabilities: either ask another worker for another label and recurse, or return the label with the most number of votes. \clowder\ achieves this goal. Figure \ref{single-vote} shows a \clowder\ program for labeling (voting) that implements the algorithm we just described. It assumes there are two possible labels and reposes the problem as one of discovering if the first label is better than the second. Notice that the program makes no reference to any POMDP components in its definition. The user does not need to specify some hidden state that represents the correct answer. Instead, an expert has previously defined the primitive \texttt{crowd-vote}, which contains a mathematical model describing the dynamics of its inputs and ouputs, and we see that all the user needs to do in excess of providing the program logic is to use that primitive (by finding it in a library) and provide a choice point in the program. Then, \clowder\ will automatically determine the optimal branch to take at runtime. 

\begin{figure}
\center
\begin{tabbing}
\noindent\texttt{(d}\=\texttt{efine} \texttt{(vote-better? q a0 a1 c0 c1)}\\
\> \texttt{(c}\=\texttt{hoose} \\
          \>\>\texttt{(if } \= \texttt{(crowd-vote q a0 a1)}\\
          \>\>\>\texttt{(vote-better? q a0 a1 (+ c0 1) c1)}\\
          \>\>\>\texttt{(vote-better? q a0 a1 c0 (+ c1 1)))}\\
          \>\>\texttt{(if (> c0 c1) \#t \#f)))}\\
\end{tabbing}
\vspace{-15px}
\caption{A \clowder\ program for labeling that manages uncertainty without exposing it to the user. \texttt{q} is an input question, \texttt{a0,a1} are two possible answers, and \texttt{c0,c1} count the number of votes for each choice.}
\label{single-vote}
\vspace{-10px}
\end{figure}

We show the value of \clowder\ by writing many useful crowdsourcing POMDPs as \clowder\ programs. The simplicity of the language and programs leads us to believe that our system will be easily usable by non-experts. 

In summary, our contributions are (1) a system that exploits a separation of experts and non-experts that allows non-experts to write POMDPs while being isolated from the mathematical description of the decision process, and (2) an implementation that will likely help non-expert POMDP practitioners in their ability to write decision processes in a variety of domains. 

\section{Related Work}
Various languages have been proposed in the literature for representing POMDPs.  Several of those are declarative representations, which ask the user to explicitly declare each component (state, actions, etc.) of a POMDP. Examples include Cassandra-style format \footnote{http://pomdp.org}, Probabilistic PDDL \cite{younes-ipc04}, and RDDL \cite{sanner11}.  

Several procedural languages have also been developed including A$^2$BL \cite{simpkins-oopsla08}, ALisp \cite{andre-aaai02} and concurrent ALisp \cite{marthi-ijcai05}, Hierarchical Abstract Machines (HAMs) \cite{parr-nips98}, and Programmable Hierarchical Abstract Machines (PHAMs) \cite{andre-nips01}. All these representations allow a user to provide control knowledge in the form of a procedural program for an {\em existing} and explicitly specified MDP. In other words, when one writes an ALisp program, one must additionally explicitly specify an MDP that the program is tied to and constraining. 

DTGolog \cite{boutilier-aaai00} is a situation calculus-based procedural language that can both define decision problems (by defining a set of axioms) and specify control. While this unification is useful for experts, users must still explicitly specify MDPs before they can write control programs. While non-expert users can write control policies for expert-written MDPs, they cannot write their own MDPs. Additionally, they must explicitly use the components of the MDP in their control policies.

Stochastic Programs (SP) \cite{mcallester-99} provide a language for experts to write world models and primitive actions and then compose those primitive actions to create control policies for the corresponding world model. While they do no consider non-expert use of the language, one can imagine non-experts using the primitive actions to create control policies. However, these non-expert users must explicitly use POMDP components. In particular, the primitives do not abstract away from the state space; they take state variables as arguments, and return state variables and observations. SP also does not allow for learning policies. The language only provides for specifying a complete control policy and evaluating its utility.

The work on adaptive programs \cite{pinto-icmla10} allows non-expert users to quickly construct observable decision processes by writing programs that can contain optimizable choice points. However, construction of POMDPs still requires the user to explicitly model POMDP details like belief states and belief updates. 

The key difference between our work and all the related work is that we do not ask the programmer to explicitly define or use POMDP components. Instead we leverage a division of work between experts and non-experts whereby non-experts can glue together expert-provided POMDP components to create entirely new POMDPS for their own problems.

\section{Crowdsourcing Background}


We are motivated by and describe our system using many examples taken from the crowdsourcing literature. Crowdsourcing \emph{requesters}, those who hire crowdsourced workers, often design \emph{workflows} to complete their tasks. An example of a simple workflow is the labeling workflow we described in the Introduction (Figure \ref{single-vote}).  Another example of a workflow is the iterative improvement workflow \cite{little09}. Suppose a requester wants to generate some artifact, like a text description of an image. In the iterative improvement workflow, he first hires one worker to improve an existing caption. Then, he asks several workers to vote on whether the new caption is better than the old caption. Finally, he repeats this process until he is satisfied with the caption. Previous work has hand-crafted a POMDP for this particular workflow in order to make dynamic decisions like when to vote and when to stop, showing significant savings over static policies \cite{dai-aaai10}. Our system would allow these requesters, who are likely not planning experts, to easily write a program for this workflow (and others) that implicitly defines a POMDP, which our system can then optimize and control.  


\section{Primitives}
In order to interpret a program like the one for iterative improvement as a POMDP, \clowder\ needs mathematical models for function calls, like the one that hires a worker to improve an artifact. \clowder\ asks experts to define primitives to bootstrap this process. A \emph{primitive} is a ten-tuple $\langle {\cal D}, {\cal R}, \Omega, {\cal T}, {\cal O}, {\cal I}, {\cal C}, {\cal D}_U, {\cal R}_U, {\cal F} \rangle$, where:
\begin{itemize}
\item ${\cal D} =  {\cal D}^1 \times \ldots \times {\cal D}^n$ is a set of \emph{domain states}.
\item ${\cal R}$ is a set of \emph{range states}.
\item $\Omega$ is a set of \emph{observations}.
\item  ${\cal T}: {\cal D} \times {\cal R} \rightarrow [0,1]$ is a \emph{transition function}.
\item ${\cal O}: {\cal R} \times \Omega \rightarrow [0,1]$ is an \emph{observation function}.
\item ${\cal I}$ is an $n$-dimensional indicator vector indicating which of the ${\cal D}^i$ are observable.
\item ${\cal C}: {\cal D} \rightarrow \mathbb{R^+}$ is a \emph{cost function}.
\item ${\cal D}_U =  {\cal D}_U^1 \times \ldots \times {\cal D}_U^n$ is a set of \emph{user domain states}.
\item ${\cal R}_U$ is a set of \emph{user range states}.
\item  ${\cal F}: {\cal D}_U \rightarrow {\cal R}_U$ is a \emph{user function}. 
\end{itemize}

An expert defines all 10 of these components. Intuitively, a primitive is a function (the user function ${\cal F}: {\cal D}_U \rightarrow {\cal R}_U$), and a model of that function (the rest of the components). We note that in the special case when ${\cal D} = {\cal R}$, $\langle {\cal D}, {\cal R}, \Omega, {\cal T}, {\cal O}, {\cal C} \rangle$ is a one-action POMDP. The best way to understand the purpose of primitives is through an example. In particular, we discuss how we would define the primitive \texttt{c-imp}, which would be a function used in iterative improvement to improve an artifact.

First, we define the function part. \texttt{c-imp} should take an artifact $\alpha$ as input, call some crowdsourcing API, and return an improved artifact $\alpha'$. Therefore, we define the user function to be ${\cal F}(\alpha \in {\cal D}_U)$ = calltoAPI($\alpha$), where ${\cal D}_U = {\cal R}_U$ are the set of all possible artifacts $\alpha$ (\eg, the set of all strings). Now we want to define the model part of the primitive, which will track the quality of artifacts being input and output by ${\cal F}$. We define ${\cal D} = {\cal R} = [0,1]$ to represent the hidden quality of the artifact. The transition function ${\cal T}$ needs to encode the probability of getting an artifact of quality $q'$ if a worker improves an artifact of quality $q$. Therefore, we define ${\cal T}(q\in {\cal D},  q' \in {\cal R}) = P(q' | q)$ using some conditional distribution like a Beta distribution. We set ${\cal C}$ to be the amount of money paid to a worker, which can be some constant like 5 cents.  \texttt{c-imp} produces no observations so $\Omega$ is empty, and hence there is no observation function ${\cal O}$. Finally, we set ${\cal I} = (0)$ indicating that ${\cal D}$ is not observable.

This primitive combines a model for the improvement of an artifact with a function that outputs an improvement of the artifact. We can view each ${\cal D}^i_U$ as a model for ${\cal D}^i$ and ${\cal R}_U$ is a model for ${\cal R}$.  So, for a non-expert user who does not care about or cannot understand the model, a primitive is simply the function  ${\cal F}: {\cal D}_U \rightarrow {\cal R}_U$. These non-experts can call primitives in their programs, and when they do so, they expect they are calling the function ${\cal F}$.
\section{The Language}
We now describe the \clowder\ language, which users use to express adaptive programs using primitives. We define a \clowder\ program to be a function definition written in the \clowder\ language. The \clowder\ language is an extension of Lisp, because Lisp is both easy to write and easy to interpret. Following previous work \cite{andre-aaai02,boutilier-aaai00,pinto-icmla10}, we add the special form \texttt{(choose <$exp_0$> <$exp_1$> ... )}. 

The \texttt{choose} special form is a construct for dynamic execution. It takes a variable number of arguments, each of which is a Lisp S-expression. When used in a program, it describes a choice point in the program, meaning that at runtime, \clowder\ will dynamically decide the optimal argument expression to execute. 

A key contribution of \clowder\ is how function calls are interpreted. However, we first emphasize that for a non-expert user, \clowder\ behaves just as an ordinary programming language. When a non-expert user calls a primitive: $(p\ \ arg_0\ \ldots arg_n)$, the expression evaluates to ${\cal F}(arg_o, \ldots, arg_n)$ where $arg_i \in {\cal D}^i_U$.   A function call is just a function call, regardless of whether the function is a \clowder\ primitive or a user-defined function. Figure \ref{c-imp} shows a \clowder\ program that a crowdsourcing expert might write for improving a piece of text using crowdsourcing. It is a simplified version of iterative-improvement that removes voting. 

\begin{figure}
\begin{tabbing}
\noindent\texttt{(d}\=\texttt{efine} \texttt{(improve text)}\\
\> \texttt{(c}\=\texttt{hoose} \\
          \>\>\texttt{(improve (c-imp text))}\\
          \>\>\texttt{text)))}
\end{tabbing}
\caption{A \clowder\ program for improving a piece of text. \texttt{text} is the current text.}
\label{c-imp}
\end{figure}

To the non-expert user, the argument \texttt{text} is bound to a string, $\alpha$. During execution, there are two execution paths. Suppose the program chooses the first path.  When the string, $\alpha \in {\cal D}_u$, is passed to \texttt{c-imp}, a primitive we described in the previous section, a function, ${\cal F}$ is called to hire a crowdworker to improve the string. \texttt{c-imp} returns the improved string $\alpha' \in {\cal R}_U$ and the program recurses. If the program chooses the second execution path, the string is returned.

However, the semantics of the \clowder\ language are more complex. The expert user understands that in \clowder, all variables are actually bound to \emph{two} values, and thus all expressions evaluate to two values. The first value, the \emph{Normal value}, is the usual value that the non-expert user sees and understands, and is the same as it would be in any other programming language. For example, \texttt{text} is bound to a string. The second value is a \emph{Poaps value} that can be unobservable, and hence, represented by a distribution in our system. This value is the value of a state variable in the POMDP \clowder\ constructs.  Let $\overline{exp}$ represent this possibly hidden \clowder\ value of some expression, $exp$.

For the expert user, calling a primitive is everything that it is for the non-expert user. However, the expert user knows that in addition to being bound to the Normal value, the result of an expression $(p\ \ arg_0\ \ldots arg_n)$ is bound to a \clowder\ value $r \in {\cal R}$ with probability ${\cal T} ((\overline{arg_0} \ldots, \overline{arg_n}), r)$, where $\overline{arg_i} \in {\cal D}^i$. Furthermore, when $p$ is called, an observation $o \in \Omega$ is produced  with probability ${\cal O}(r, o)$. The \clowder\ agent reasons about the \clowder\ values in the program using observations in order to make decisions. 

Consider the program in Figure \ref{c-imp}. The argument \texttt{text} is actually bound to two values. The first value, the string, is what the programmer cares about. The second \clowder\ value can be thought of as some unobservable measure of the quality of the text $q \in [0,1]$. The domain of this second value was implicitly specified by an expert when he defined the primitive \texttt{c-imp}. When \texttt{c-imp} is called, in addition to returning an improved string, a \clowder\ value $q' \in [0,1]$ is also returned with probabilities defined by ${\cal T}$. Then, the program recurses and \texttt{text} is now bound to both the new string and $q'$. In this example, no observation is produced.

We emphasize that the expert, the program, and the \clowder\ agent, may \emph{not} know the \clowder\ values. The \clowder\ values may be unobservable, so the best an expert and an agent can do is hold a belief about what they might be, using the observations as hints. Therefore, the next step in \clowder\ is to compile a \clowder\ program into a POMDP, and then solve the POMDP to generate a policy that controls the program based on the beliefs about the hidden values of the variables.

As another example, we provide a description of the voting program we present in Figure \ref{single-vote}. The program uses three primitives: \texttt{+}, \texttt{>}, and \texttt{crowd-vote}. The \clowder\ values of \texttt{q}, \texttt{a0}, and \texttt{a1} can be thought of as unobservable measures of the difficulty of the question and the quality of the two answers, respectively. The \clowder\ values of \texttt{c0} and \texttt{c1} are observed, and are the same as their Normal values. \texttt{crowd-vote}'s range states are the same as its observations (${\cal R} = \Omega$), and its observation function is defined as $O(r\in{\cal R}, \omega \in \Omega) = 1$ if and only if $r = \omega$. So, when it is called, it returns a \clowder\ value with probability defined by ${\cal T}$, and the observation it emits is the same \clowder\ value. The Normal value it returns is also the same as the \clowder\ value. The primitives \texttt{+} and \texttt{>} are defined in the expected way. \footnote{We note here that for planning purposes, the \texttt{if} construct in our language uses the \clowder\ value of its test expression to determine which branch to take instead of the Normal value. The next section will show how during execution we insert observations to tell our agent what branch was actually taken.} 

Of course, \clowder\ programs do not restrict users to calling primitives. Users can also call their own user-defined functions. For example, they can use their program for voting (Figure \ref{single-vote}) in a program for iterative-improvement (Figure \ref{it-i}). We note that in the program for voting, the operators \texttt{>} and \texttt{+} are primitives. When a function calls another user-defined function, the semantics are ``call-by-poaps-value.'' Quite simply, in contrast to the normal ``call-by-value'' semantics where only one value is copied and passed, in our language, both the normal value and \clowder\ value are copied and passed.

\begin{figure}
\begin{tabbing}
\noindent \texttt{(d}\=\texttt{efine} \texttt{(it-i image worse-text better-text)}\\
\> \texttt{(c}\=\texttt{hoose}\\
\>\>\texttt{(it-i} \= \texttt{image} \= \texttt{better-text}\\
\>\>\>\> \texttt{(c-imp  better-text))}\\
\>\>\texttt{(if} \= \texttt{(vote-better? }\= \texttt{image better-text}\\
\>\>\>\> \texttt{worse-text 0 0)}\\
\>\>\>\texttt{(it-i image worse-text better-text)}\\
\>\>\>\texttt{(it-i image better-text worse-text))}\\
\>\>\texttt{better-text))}\\
\end{tabbing}
\caption{A \clowder\ program for iterative-improvement on descriptions for images.}
\label{it-i}
\end{figure}

We now define a compiler for the \clowder\ programming language, which converts the language into a POMDP.

\section{The Compiler}
Before we delve into the technical details of the compiler, we provide a high-level description of the process.

The whole point of converting a \clowder\ adaptive program into a POMDP is to enable construction of an optimal policy for the program, but this requires an optimality criterion. Since optimality is different for every user, we need the flexibility to construct different utility functions or goals for individual users. 
In light of these challenges, we assume that executing a primitive incurs a cost defined by the primitive, but that an externally-provided and expert-defined mechanism for goal or utility elicitation (\eg ~\cite{chajewska-aaai00}) 
is used to guide the overall program objective.  For example, consider the voting program of Figure \ref{single-vote}. It might cost \$0.05 to execute the \texttt{crowd-vote} primitive, but learning a given user's desired target accuracy in order to guide the execution of the program requires additional information. A reasonable goal elicitation module for a POMDP compiled from this program is one that simply asks the user for a desired accuracy and budget, and converts the desired accuracy into a goal belief state and the budget into a horizon to ensure no dead ends. Such a goal elicitation module could be used for any program that outputs ``correct answers'' and uses \texttt{crowd-vote}.   

Alternatively, users can forego providing their goals/utilities with an external mechanism and simply write goals into their programs. For example, they can simply write their own termination conditions that rely only on the visible parts of their programs. Whether or not we have elicited goals/utilities, our goal is to execute the branches that minimize the expected sum of costs.

From our description of the \clowder\ language, we have a very natural, but unbounded, decision process that emerges. This decision problem can be posed as a history-based MDP. The state of the MDP consists of all the branches taken and observations received so far. An action in the MDP is choosing a branch in the program. Taking an action produces observations and costs, so the transition function (from a list of actions and observations to another list of actions and observations) is completely determined by the dynamics of the underlying primitives and our ``call-by-poaps-value'' semantics.

However, we do not want to define such an MDP because solution methods will not scale. Instead we define an equivalent POMDP. We now define the \clowder\ compiler that produces this POMDP by describing in detail the process that converts any \clowder\ program into a POMDP. Given an input program $p$, the compiler converts $p$ into a POMDP $(M \circ S) (p)$ by the following steps:
\begin{enumerate}
\item Define a set of states $S(p)$ by statically analyzing $p$. Each state variable of $S(p)$ will represent the \clowder\ value of some variable or expression in $p$ or a function called by $p$. So, we call a state $c \in S(p)$ a \emph{control state}, because it is the part of the state that determines what action should be taken.
\item Construct a Hierarchical Abstract Machine (HAM) \cite{parr-nips98} $M(p)$ by evaluating the program under a set of operational semantics. A HAM is a type of nondeterministic finite state machine. Each state $m \in M(p)$ will be a representation of the current program counter. In other words, it tells the agent where in the program it currently is. So, we call this state the \emph{machine state}. This state will be fully observable, and provides information about what actions are available to take.
\item Following the insights of \cite{parr-nips98}, merge $M(p)$ with $S(p)$ to create a POMDP $(M \circ S) (p)$ with state space $\hat{S} = StatesOf(M(p)) \times S(p)$, and define the actions, transition function and observation function of $(M \circ S) (p)$ by traversing $M(p)$ and applying a set of rules. Therefore, a single state in our POMDP will be a tuple $(m, c)$, where one part of the state is the machine state, and the other part of the state is the control state. Finally, using a separate goal/utility elicitation module, integrate the goal or rewards into the POMDP.
\end{enumerate}

\subsection{Step 1: Creation of a State Space S}
First, we need to define the state variables for the arguments of $p$. Let $X(p) = \{arg_1, \ldots, arg_n\}$ be the set of all arguments of $p$. In order to construct $S(p)$, the compiler needs to know the state space of each argument. The state space of an argument $arg_i$ is defined by the domain state space ${\cal D}^i$ of the primitives that use $arg_i$ \footnote{We assume that all the primitives that use a variable $arg_i$ have the same state space ${\cal D}^i$. We can relax this assumption by using typing techniques.}. 

Next, we need to define state variables for all subexpressions in $p$. 
A program $p$ in our language can be viewed as an evaluation tree of expressions. For example, Figure \ref{improve-tree} shows the corresponding tree for the \texttt{improve} program (Figure \ref{c-imp}). In order to remember all state necessary to control, we have a state variable for each subexpression in $p$. We denote this state space $R(p)$.

\begin{figure}
\centering
\includegraphics[scale=0.15]{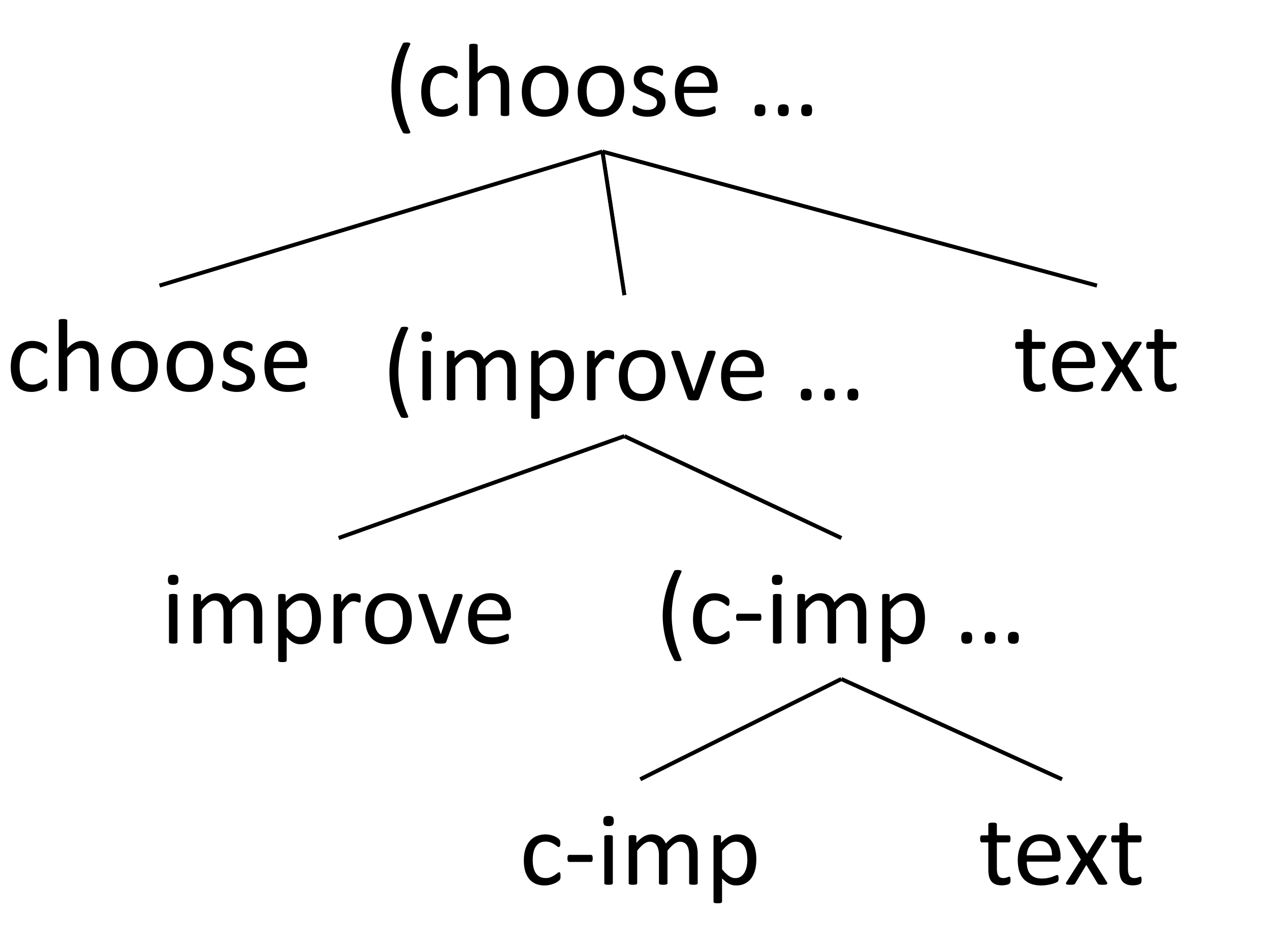}
\caption{The tree for \texttt{improve}}
\label{improve-tree}
\end{figure}



Let $F(p)$
be the set of \clowder\ programs corresponding to user-defined functions that are called in $p$. Then, we abuse notation for ease of understanding, and recursively define $S(p) = S(F(p)) \times Domain(X(p)) \times Domain(R(p))$. Thus, the state space that we have constructed is a cross product of the state spaces of all the functions that $p$ calls, the state spaces of all the arguments of $p$, and the state space which consists of all possible evaluations of every expression in $p$. This state space is the control state space. 
Since we use Monte-Carlo solution methods to solve our POMDPs, we do not need to express the state in a closed, non-recursive form. 

\subsection{Step 2: Construction of a HAM}
The second step in the compilation process is to construct the machine state space by constructing a HAM \cite{parr-nips98} $M(p)$ given $p$ and $S(p)$. The HAM's states will be used in our constructed POMDP as observable state variables that represent the current program counter. Each HAM state represents the evaluation of an expression. In other words, the HAM will be the part of the POMDP that says where in the evaluation tree we are for a program $p$. 

The five types of states of a HAM are Action, Call, Choice, Start, and Stop. \emph{Call} states represent a call to a user-defined function. They will execute the corresponding HAM. \emph{Choice} states can transition to one of many HAMs. \emph{Stop} states signify the end of execution of a HAM and return control to the next state of the parent calling HAM. \emph{Start} states denote the initial HAM state. \emph{Action} states represent the evaluation of a symbol or constant, or the execution of a primitive.  

Finally, we add a sixth type of state: an Observation State. \emph{Obs} states do not represent the evaluation of any expression in $p$. These states will do nothing except emit an observation. These states are inserted after conditionals so that an agent can eliminate inconsistent beliefs. These states were not necessary in \cite{parr-nips98} because their world was fully observable.

We evaluate the program $p$ to a HAM by using \emph{inference rules} in the same way computer programs are evaluated by interpreters. We recursively evaluate subexpressions to HAMs using inferences rules and then combine them into larger and larger HAMs for each parent expression until we have a HAM for $p$. 

Consider the \texttt{improve} program in Figure \ref{c-imp}. We first use an inference rule for \texttt{define}, which leads to using a \texttt{choice} inference rule. We provide a simplified version of the \texttt{choice} inference rule here. 

\begin{mathpar}
\infer[choice]{\expeval{H}{e_i}{M_i}}{\expeval{H}{(choose\ e_1\ \cdots e_n)}{Choice(M_1,\ldots,M_n)}}
\end{mathpar}

The rule says that if each expression $e_i$ evaluates to a HAM $M_i$ under the heap $H$, then the expression $(choose \ e_1 \cdots e_n)$ evaluates to a HAM that contains a Choice node that can transition to any of the HAMs $M_i$. Thus, for the \texttt{improve} program, when we evaluate the \texttt{(choose...)} subexpression, there are two expressions that needs to be recursively evaluated. 
The result is the HAM in Figure \ref{c-imp-ham}.

After we construct a HAM, we post-process by adding a Start state to the beginning and a Stop state to the end. Additionally, if we see any tail calls (Call states that are leaf nodes), we can add an edge from the call state to the beginning of the HAM, and change the semantics of that call state so that it transitions to the next state instead of executing another HAM as a subroutine.

\begin{figure}
\centering
\includegraphics[scale=0.25]{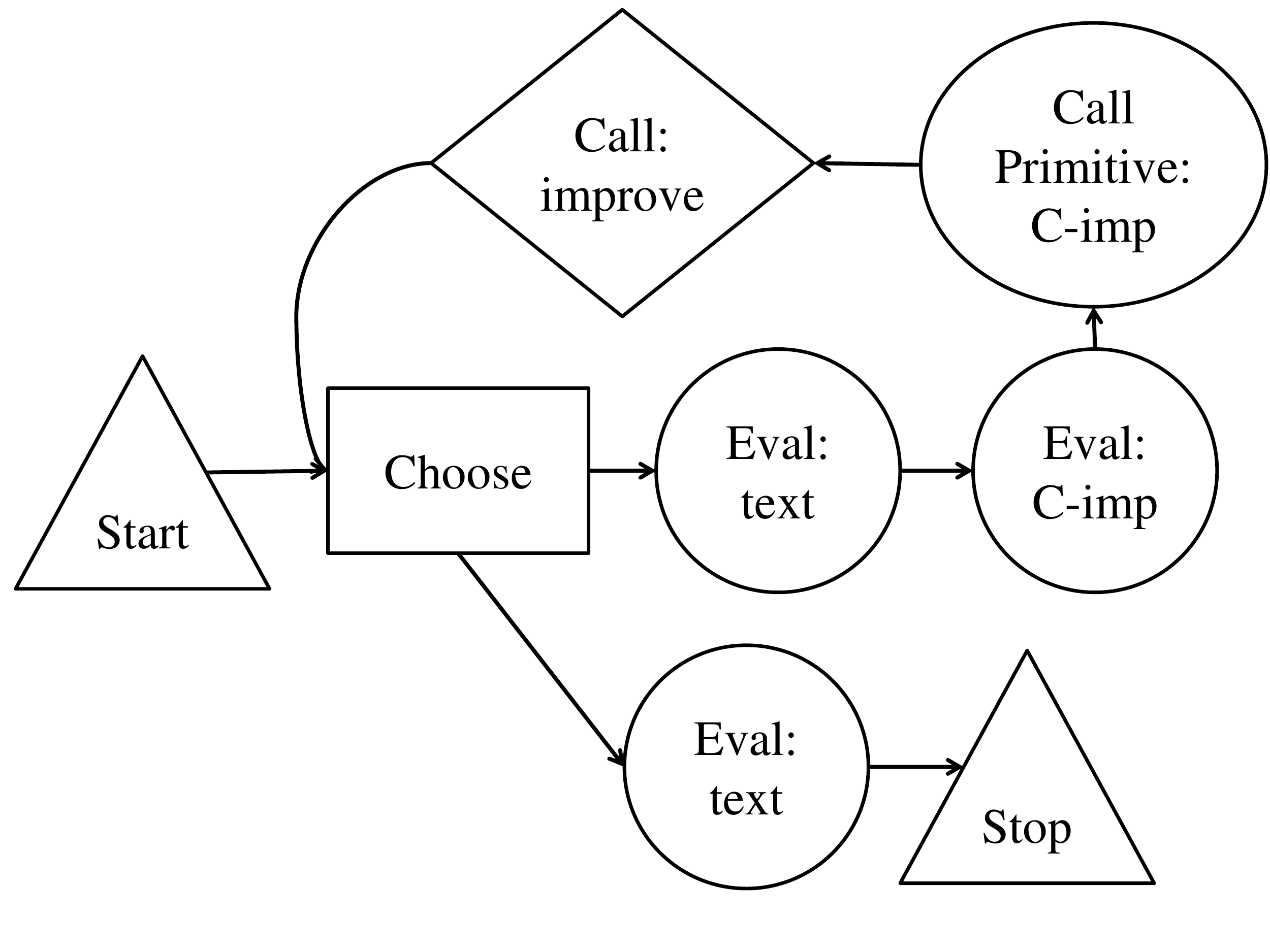}
\caption{A HAM for the program \texttt{improve} in Figure \ref{c-imp}. Circles are action states, diamonds are call states, rectangles are choose states.}
\label{c-imp-ham}
\end{figure}

\subsection{Step 3: Putting It All Together}
Letting $S(M(p))$ denote the set of states of a HAM $M(p)$, the state space of the POMDP, $(M \circ S) (p)$, that we construct is $\hat{S}(p) = S(p) \times StatesOf(M(p))$.

The actions depend only on the current machine state $m \in M(p)$, which is fully observable. In any machine state that is not a Choice state, the agent can only take one action. If the machine state is a Choice state, then the actions are the branches of the Choice state.
 
We define the transition function $T(s, a, s')$ of  $(M \circ S)(p)$ such that values are passed around correctly between states to enforce call-by-poaps-value semantics. 

The observation function $O(s', o)$ is simple. Observations are only received in two scenarios. First, observations can be received when executing a primitive and they are defined by the primitive.  
Second, observations can be received when transitioning to a HAM observation state. 

\section{Optimality}


We now show that the POMDP we construct is correct, in that its optimal policies result in the optimal executions of its corresponding program. 
\begin{lemma}
For any POMDP $(M \circ S)(p)$ for a program $p$, let $C$ be the set of choice belief states, which are the belief states in which the machine component of every possible world state is a choice node. There exists a semi-MDP  $(m \circ s)(p)$ with state space $C$, such that an optimal policy $\pi$ for $(m \circ s)(p)$ corresponds to an optimal policy $\Pi$ for  $(M \circ S)(p)$, in that $\Pi$ simply augments $\pi$ by mapping belief states not in the domain of $\pi$ to their single, default actions.
\end{lemma}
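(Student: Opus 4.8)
The plan is to adapt the classical Parr--Russell construction that collapses a HAM-constrained MDP into an SMDP over its choice points, now carried out in belief space and accounting for partial observability. First I would record the structural fact that makes this possible: the machine component $m$ of a POMDP state is fully observable, and in every reachable belief state the support is either concentrated on a single machine state, or is a transient ``split'' occurring immediately after a conditional, which the inserted Obs state resolves on the very next transition, before any action is chosen. Consequently the set of belief states at which the agent has more than one available action is exactly the set $C$ of choice belief states, and at every belief state outside $C$ the action is forced (the ``single, default action''). Hence any policy $\Pi$ for $(M \circ S)(p)$ is completely determined by its restriction $\pi = \Pi|_C$, and conversely any $\pi$ assigning to each $b \in C$ one of its branch actions extends uniquely to a full POMDP policy by filling in the forced actions; this gives a bijection between POMDP policies (all of which agree off $C$) and policies of the SMDP to be constructed.

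Next I would define $(m \circ s)(p)$ explicitly: its state space is $C$; at $b \in C$ its actions are the branches of the choice node; and the semi-Markov transition from $b$ under branch $a$ is obtained by running the POMDP forward from $b$, starting with $a$ and thereafter taking the forced action at each non-choice belief state, updating the belief with the observations received (including the branch-indicating observations from Obs states) and accumulating the primitive costs, until the first time a belief state in $C$ (or a Stop/terminal belief state) is reached. The resulting distribution over the next choice belief state, elapsed steps, and accumulated cost defines the SMDP transition and cost model. I would note that this macro-transition is well-defined: between two consecutive visits to $C$ the trajectory contains no choice node by construction, so the continuation is a forced process whose distribution depends only on $b$ and $a$; and under the horizon/no-dead-end assumption used for goal elicitation (or, alternatively, under discounting) it reaches $C$ or Stop with probability one, so the expected cost is finite.

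The core of the argument is to match optimal values. I would write Bellman's optimality equation for $(M \circ S)(p)$ and restrict it to $C$: for $b \in C$, expanding $V^*(b)$ and then iterating through the forced segment (which involves no choice of action), the forced steps can be integrated out to yield exactly the SMDP Bellman optimality equation for $V^*|_C$ with the cost and transition model above. By uniqueness of the solution of Bellman's optimality equation --- contraction in the discounted case, properness of policies in the stochastic-shortest-path case --- it follows that $V^*|_C = v^*$, the optimal value of $(m \circ s)(p)$. An optimal policy $\Pi$ for $(M \circ S)(p)$ then attains $v^*$ at every $b \in C$, so $\pi = \Pi|_C$ is optimal for $(m \circ s)(p)$; conversely, extending any SMDP-optimal $\pi$ by the forced actions yields a policy attaining $V^*$ on $C$, and hence everywhere, since the off-$C$ values are determined by the forced continuation --- so it is optimal for the POMDP. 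This is precisely the claimed correspondence.

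The main obstacle I expect is the belief-state bookkeeping rather than anything deep: one must argue carefully that the induced process really is Markov on $C$, i.e.\ that the belief reached at the next choice point, together with the intervening cost, depends only on the current choice belief state and the chosen branch and not on further history. This hinges on the observability of $m$ and on the Obs states restoring a ``clean'' (single-machine-state) belief after each conditional, so that no hidden information about the machine component leaks across macro-transitions. A secondary technical point is justifying termination of the forced segments (well-definedness of the SMDP) and invoking a uniqueness theorem for the Bellman equation in whichever of the discounted or shortest-path regimes the goal-elicitation module produces; both are handled by the horizon assumption mentioned earlier.
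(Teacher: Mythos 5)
Your proposal is correct and follows essentially the same route as the paper: the paper's own (much terser) proof likewise passes to the belief-MDP of $(M \circ S)(p)$, removes the non-choice belief states to obtain an equivalent belief-semi-MDP over $C$, and identifies optimal policies of the two. You simply fill in the details the paper leaves implicit --- the explicit SMDP construction, the forced-action bijection on policies, the Bellman-equation matching, and the termination of forced segments.
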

\begin{proof}
Consider the belief-MDP that corresponds to $(M \circ S)(p)$. Consider the states that are not choice beliefs. We can remove these states to produce an equivalent belief-Semi-MDP. The optimal policy for this belief-Semi-MDP is the same as the optimal policy for the belief-MDP, and thus the same as the optimal policy for $(M \circ S)(p)$.
\end{proof}
\begin{thm}
Let $\mathcal{M}$ be the history-based MDP associated with a program $p$. Then an optimal policy for the POMDP we generate, $(M \circ S)(p)$, can be used as an optimal policy for  $\mathcal{M}$.
\end{thm}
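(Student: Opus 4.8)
The plan is to chain two reductions. The first is the Lemma just proved, which collapses the POMDP $(M \circ S)(p)$ to the semi-MDP $(m \circ s)(p)$ over the choice belief states $C$, with optimal policies in correspondence. The second is a new equivalence: I would show that $(m \circ s)(p)$ is an MDP homomorphic image of the history-based MDP $\mathcal{M}$, so that optimal policies transfer from the former to the latter. Composing the two yields the statement.

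First I would set up the map between state spaces. A state of $\mathcal{M}$ is a history $h = (a_1, o_1, \dots, a_k, o_k)$ of branch choices and emitted observations; I restrict attention to histories whose execution has reached a choice point, since at all other histories $\mathcal{M}$ offers a single forced action and these contribute nothing to a policy. Define $\phi(h)$ to be the belief state obtained by running the belief-MDP of $(M \circ S)(p)$ from its initial belief, feeding it the actions $a_1,\dots,a_k$ (interleaved with the forced default actions the HAM takes at non-choice machine states) and conditioning on $o_1,\dots,o_k$ (including the deterministic observations emitted at the Obs states inserted after conditionals). Then $\phi(h) \in C$ by construction, and $\phi$ is surjective onto the reachable choice belief states, since every reachable belief state is produced by some action/observation sequence and hence by some history.

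The heart of the argument is that $\phi$ is a homomorphism of decision processes: the transition kernel and cost of $\mathcal{M}$ at $h$ depend on $h$ only through $\phi(h)$, and $\phi$ intertwines the macro-transitions of $\mathcal{M}$ with those of $(m\circ s)(p)$. This rests on the call-by-poaps-value semantics and the explicit definitions of $T$, $O$, and ${\cal C}$ in Step 3. Between two consecutive choice points, $\mathcal{M}$ (equivalently, the HAM) executes a branch-dependent sequence of primitives whose identities are governed by the \clowder\ values of the guard expressions of any intervening \texttt{if}'s; by the construction of $S(p)$ and the faithfulness of the inference-rule semantics to the language, the conditional distribution over \clowder-value assignments given $h$ is exactly the control-state marginal of $\phi(h)$. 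Hence the joint law of (the next choice node reached, the observation burst, the accumulated cost, the resulting belief) is a function of $(\phi(h), a)$ alone and equals the corresponding macro-transition of $(m \circ s)(p)$. I would prove this by induction on the evaluation derivation / the structure of $M(p)$: the base cases are a single primitive execution (where the claim is the definition of $T$, $O$, ${\cal C}$) and an Obs state (where the emitted observation records the branch taken, keeping $\phi$ consistent with the semantics — the role flagged in the earlier footnote), and the inductive step follows the rules used to build $M(p)$ from subexpression HAMs, including the $S(F(p))$ factor for user-defined calls and the tail-call edges.

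Finally I would invoke standard MDP-homomorphism (lumping) theory: since $\mathcal{M}$ admits a homomorphism $\phi$ onto $(m \circ s)(p)$ — same available actions, intertwined transition kernels, matching costs — any optimal policy $\pi$ for $(m \circ s)(p)$ lifts to the optimal policy $h \mapsto \pi(\phi(h))$ for $\mathcal{M}$, with $V^*_{\mathcal{M}}(h) = V^*_{(m\circ s)(p)}(\phi(h))$. By the Lemma, an optimal policy $\Pi$ for the POMDP $(M \circ S)(p)$ restricts on $C$ to an optimal policy $\pi$ for $(m \circ s)(p)$; therefore $h \mapsto \Pi(\phi(h))$ is optimal for $\mathcal{M}$, i.e.\ the optimal policy for $(M \circ S)(p)$ can be used as an optimal policy for $\mathcal{M}$. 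I expect the homomorphism step to be the main obstacle: rigorously justifying that the belief state is a sufficient statistic for the history amounts to a compiler-correctness induction over the evaluation tree, and the delicate bookkeeping is in nested function calls, tail calls, and the inserted Obs states; the variable-duration (semi-MDP) aspect is comparatively harmless, since the Lemma already absorbs it and we are optimizing expected total cost.
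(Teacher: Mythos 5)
Your proposal takes essentially the same route as the paper: the paper's proof is exactly the composition of Lemma~1 with the observation that $\mathcal{M}$ and $(m \circ s)(p)$ are stochastically bisimilar under the map sending each history to its corresponding belief state, which is the same history-to-belief correspondence you call an MDP homomorphism. The only difference is that the paper asserts the bisimulation in one sentence while you spell out the compiler-correctness induction needed to justify it; that elaboration is welcome but does not change the argument's structure.
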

\begin{proof}
$\mathcal{M}$  and $(m \circ s)(p)$ are stochastically bisimilar \cite{givan-aij03} (we map each history to its corresponding belief state), and the optimal policy of $(m \circ s)(p)$ corresponds to that of $(M \circ S)(p)$ (Lemma 1), so the optimal policy for $(M \circ S)(p)$ can be used as an optimal policy for $\mathcal{M}$ (Lemma 1).
\end{proof}
This theorem also affirms that an optimal policy for our constructed POMDP is not just a ``recursively optimal'' \cite{dietterich-jair00} policy. For example, suppose a user has written a program $p$ that calls some other user-defined program $f$ that also calls some other user-defined program $g$. Then, the optimal policy's actions while in $g$ consider not only the state local to $g$, but also the state local to $f$ and the state local to $p$. In other words, an optimal policy for $(M \circ S)(p)$ does not solve lots of primitive POMDPs in isolation. Notably, a POMDP solver that produces an optimal policy will update beliefs about state variables local to $p$ even when observations are made about correlating state variables local to $g$.  



\section{Monte Carlo Planning}
We solve the POMDP when a user runs a program. The POMDP that we construct can potentially have many unreachable states. Additionally, we do not want to construct the entire state space or the full matrix representation of the transition and observation functions, since these can be very large or infinite. Therefore, we choose to use online Monte-Carlo methods to solve the POMDP.  While we try using a UCT-based solver, POMCP (without a rollout policy) \footnote{Since \clowder\ is a general representation language, the existence of a rollout policy cannot be assumed.} \cite{silver-nips10}, we find that the value function can take an extraordinary amount of time to converge.

Instead, we modify RTDP-Bel \cite{barto-ai95} to create C-RTDP, an algorithm similar to HAMQ-learning \cite{parr-nips98} that takes advantage of the fact that the actual complexity of the POMDP is determined by the number of choice points. C-RTDP modifies RTDP-Bel by only performing backups on \emph{choice beliefs}. In the POMDP that we construct, all the states that have non-zero probability in a reachable belief state will always have the same machine state. A \emph{choice belief} is one in which the machine component of every state is a choice node.

We fully specify our algorithm below. $b_a^o$ means the belief state given that the agent had belief state $b$, then took action $a$ and received observations $o$. $C_a^o$ is the expected cost of taking action $a$ in belief $b$ and receiving observations $o$. 

\begin{algorithm}
\DontPrintSemicolon
Initialize belief $b = b_0$\;
Sample $s = (m, c)  \sim b$\;
\If {$m$ is terminal} {
  Return \;
}
\eIf {$m$ is a choice node} {
\For{every action $a$} {
$\Psi = \cup_{i\in \mathbb{N}}\{(o_1,\ldots,o_i) | b_a^{o_1,\ldots,o_i}$ is choice belief$ \}$ \;
$Q(a, b) = \sum_{\psi \in \Psi} P(\psi | a)(C_a^\psi + V(b_a^\psi))$\;
$\hat{a} = \min_a Q(a, b)$\;
}
} 
{
$\hat{a} = $ default action \;
}
Update $V(b) = Q(\hat{a}, b)$\;
Sample $s' \sim T(s, \hat{a}, s')$, $o \sim O(\hat{a}, s')$\;
Update $b_0 = b_{\hat{a}}^o$ and repeat \;
\caption{C-RTDP (One simulation)}
\label{crtdp}
\end{algorithm}

We can show that C-RTDP converges to the optimal policy:

\begin{thm}
For any POMDP $(M \circ S)(p)$ for a program $p$, C-RTDP will converge to the optimal policy with probability 1.
\end{thm}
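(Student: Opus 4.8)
The plan is to reduce the convergence claim for C-RTDP to the known convergence of RTDP-Bel, by exhibiting C-RTDP as an instance of RTDP-Bel running on the semi-MDP $(m \circ s)(p)$ from Lemma 1 rather than on the full belief-MDP of $(M \circ S)(p)$. First I would make precise the sense in which C-RTDP ``operates only on choice beliefs'': I would verify the claim already asserted in the text, that in any reachable belief state the machine component is identical across all possible world states, so that the notion of a choice belief is well defined and the agent always knows (from the fully observable machine state) whether it is at a choice point. This makes the macro-action structure explicit: from a choice belief $b$, taking a branch $a$ and then letting the default actions run deterministically until the next choice belief is a well-defined semi-MDP transition, with macro-transition kernel $P(\psi \mid a)$ over observation sequences $\psi$ and expected cost $C_a^\psi$ exactly as written in Algorithm~\ref{crtdp}. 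The state space of this semi-MDP is the set $C$ of choice beliefs, matching $(m \circ s)(p)$.

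Next I would check that C-RTDP is literally RTDP-Bel applied to this semi-MDP. The three ingredients RTDP-Bel needs are: (i) a value function stored over (discretized) beliefs, which C-RTDP maintains as $V(b)$ over choice beliefs; (ii) greedy action selection via a Bellman backup, which is exactly the line $Q(a,b) = \sum_{\psi} P(\psi\mid a)(C_a^\psi + V(b_a^\psi))$ followed by $\hat a = \arg\min_a Q(a,b)$ and $V(b) \leftarrow Q(\hat a, b)$; and (iii) simulated greedy trials that update states along sampled trajectories, which is the sample-$s'$, sample-$o$, advance-$b$ loop. The only subtlety is that the ``one-step'' backup here is a semi-MDP (macro-action) backup; so I would invoke the extension of RTDP-Bel convergence to semi-MDPs — or, equivalently, note that the macro-MDP is itself an ordinary (belief-)MDP with a redefined transition kernel and cost, to which RTDP-Bel applies verbatim. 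I would also need to confirm the standing assumptions under which RTDP-Bel is proven to converge hold here: a proper policy exists / no dead ends (this is precisely what the horizon supplied by the goal-elicitation module guarantees), costs are nonnegative (guaranteed by ${\cal C}: {\cal D} \to \mathbb{R}^+$), and the reachable belief space under the discretization is finite (or the relevant compactness/finiteness hypothesis of \cite{barto-ai95}), so that every choice belief relevant to an optimal trajectory is sampled infinitely often with probability 1.

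Finally I would close the loop back to optimality. RTDP-Bel convergence gives that $V$ converges to the optimal value function of the semi-MDP and the greedy policy $\pi$ converges to an optimal policy of $(m \circ s)(p)$ with probability 1. By Lemma 1, the optimal policy of $(m \circ s)(p)$ corresponds to an optimal policy $\Pi$ of the POMDP $(M \circ S)(p)$ — namely $\pi$ augmented by the forced default actions at non-choice beliefs, which is exactly how C-RTDP behaves in the ``else'' branch of Algorithm~\ref{crtdp}. Hence C-RTDP converges to the optimal policy of $(M \circ S)(p)$ with probability 1.

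The main obstacle I anticipate is item (i)–(iii) bookkeeping combined with the dead-end/properness hypothesis: RTDP-Bel's convergence proof in \cite{barto-ai95} is stated for stochastic shortest-path problems with a reachable goal and no dead ends, so the argument must either assume the goal-elicitation module supplies a finite horizon (turning the problem into a finite-horizon one, trivially proper) or assume the program terminates along every execution path. Making this hypothesis explicit — and checking that the macro-action reformulation preserves it (a proper policy at the choice-belief level induces a proper policy at the primitive level, since the intervening default segments are finite by construction) — is where the real content lies; the rest is a matter of matching C-RTDP's update rules line-by-line against RTDP-Bel's.
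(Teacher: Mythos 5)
Your proposal follows essentially the same route as the paper's own (two-sentence) proof: identify C-RTDP as RTDP-Bel running on the belief-semi-MDP $(m \circ s)(p)$ over choice beliefs, invoke RTDP-Bel's convergence there, and transfer optimality back to $(M \circ S)(p)$ via Lemma~1. Your version is considerably more careful than the paper's in spelling out the hypotheses RTDP-Bel actually needs (properness/no dead ends via the horizon, nonnegative costs, finiteness of the reachable discretized belief space), which the paper leaves implicit.
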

\begin{proof}
For a POMDP  $(M \circ S)(p)$, C-RTDP solves the corresponding belief-Semi-MDP. Then by Lemma 1, we have that C-RTDP solves $(M \circ S)(p)$.
\end{proof}

By using a Monte-Carlo, online approach, we gain several advantages. Initializing the initial belief is easy. Suppose a user runs a program for a function f with arguments $arg_0\, \ldots, arg_n$. None of the expressions have been evaluated yet, so we only need to initialize our belief of the arguments $arg_i$. If $arg_i$ is observable, as defined by the primitives that use it inside $f$, then we simply define the \clowder\ value of $arg_i$ to be equal to the Normal value. Therefore, if a state is observable, its space can be infinite. If $arg_i$ is unobservable, then we initialize a uniform belief over the state space defined by the primitives that use it.

\section{Proof-of-Concept}
As a proof of concept that \clowder\ can run programs, we implement the \clowder\ system and write the voting program from the introduction. We also implement a goal eliciation module. The primitive \texttt{crowd-vote} has a cost of 1 cent and we specify a goal accuracy of 90\%. We run the program on Mechanical Turk with 1100 named entity recognition tasks. Each named entity recognition task begins by providing the worker with a body of text and an entity, like ``\textbf{Washington} led the troops into battle.'' Then it uses \emph{Wikification}~\cite{milnewitten08,ratinov11}
to find a set of possible Wikipedia articles describing the entity, such
as ``Washington (state)'' and ``George Washington,'' and asks workers to choose the article that best describes the entity. \clowder\ achieves an overall accuracy of 87.73\% with an average cost of 4.33 cents. This result is consistent with those in \cite{dai-aij13}, showing that our general purpose implementation can perform at par compared to an expert-written problem-specific POMDP, suggesting the value of our system to end-users.

\section{A Larger Example}
Throughout this paper, we have expressed several practical crowdsourcing problems in our language. Note that all our programs have only used two non-trivial expert-defined primitives: \texttt{c-imp} and \texttt{c-vote} (trivial Lisp-primitives like \texttt{+} and \texttt{>} whose hidden behaviors are identical to their visible behaviors come with \clowder). We demonstrate the versatility of our paradigm by writing \emph{find-fix-verify} \cite{bernstein10}, a more complex and popular workflow that can be used for crowdsourcing edits to text. 

For example, given a piece of text as input (like a term paper), it first asks crowdworkers to find patches in the text that need attention. Then, given these subsets of text, it asks workers to revise them. Finally, given the revisions, it asks workers to verify that the revisions are better, through some voting mechanism.

In Figure \ref{ffv}, we show the \clowder\ program for find-fix-verify (\texttt{ffv}). In addition to using \texttt{c-imp} and \texttt{c-vote}, we only need to use one more non-trivial expert-defined primitive: \texttt{c-find}, which asks a worker to provide an interval of text that requires attention. However, note that we can potentially use only \texttt{c-imp} and \texttt{c-vote} by eliminating \texttt{c-find} and replacing it with \texttt{c-vote} where the possible answers are a set of intervals. These primitives provide all the information we need to construct the POMDP for the program. The domain state spaces of the primitives provides the state spaces of the \clowder\ values of the arguments to those primitives, and the transition functions describe the probabilities of the \clowder\ values of the returns. 

 The program also uses trivial string and list manipulation primitives like \texttt{get-relevant-text}, \texttt{replace-text}, and \texttt{merge}, which like \texttt{+} and \texttt{>}, do not need to be expertly defined. \texttt{worse-text} represents what we think is the worse-text and \texttt{better-text} represents what we think is the better text. We call \texttt{ffv} with \texttt{worse-text} bound to an empty string and \texttt{better-text} bound to the text we want to improve.

There are three choices. We can: 1) find mistakes in and fix the better text (\texttt{find-fix} and then recurse, or 2) verify which version of the text is better and recurse, or 3) return the better text. \texttt{find-fix} first calls \texttt{find} to repeatedly ask workers to provide intervals in the text that require work. Then, it calls \texttt{fix} with those intervals and repeatedly asks workers to improve the text in those intervals. 

A simple goal elicitation module for this program could simply ask the user whether they would like a ``Almost-Perfect'' text, an ``Excellent'' text, or a ``Satisfactory'' text. It would translate the choice into a goal belief on the hidden quality of the text, and then minimize the expected cost of achieving that goal.



\begin{figure}
\center
\begin{tabbing}
\noindent \texttt{(d}\=\texttt{efine} \texttt{(ffv worse-text better-text)}\\
\> \texttt{(c}\=\texttt{hoose}\\
\>\>\texttt{(ffv} \= \= \texttt{better-text}\\
\>\>\>\> \texttt{(find-fix better-text))}\\
\>\>\texttt{(if} \= \texttt{(vote-better? }\= \texttt{'which is better?' better-text}\\
\>\>\>\> \texttt{worse-text 0 0)}\\
\>\>\>\texttt{(ffv worse-text better-text)}\\
\>\>\>\texttt{(ffv better-text worse-text))}\\
\>\>\texttt{better-text))}\\
\end{tabbing}
\begin{tabbing}
\noindent\texttt{(d}\=\texttt{efine} \texttt{(find-fix text)}\\
\>\texttt{(fix text (find text '()))}
\end{tabbing}
\begin{tabbing}
\noindent\texttt{(d}\=\texttt{efine (fix text intervals)}\\
\>\texttt{(ch}\=\texttt{oose}\\
\>\>\texttt{(let } \=  \texttt{((next-int (choose intervals))}\\
\>\>\> \texttt{ (next-text (get-relevant-text text next-int))}\\
\>\>\> \texttt{ (better-text (c-imp next-text)))}\\
\>\>\> \texttt{(fix }\=  \texttt{(replace-text text next-int better-text)}\\
\>\>\>\> \texttt{intervals))}\\
\>\>\texttt{text))}\\
\end{tabbing}
\begin{tabbing}
\noindent\texttt{(d}\=\texttt{efine (find text intervals)}\\
\>\texttt{(ch}\=\texttt{oose}\\
\>\>\texttt{(find text (merge (c-find text) intervals))}\\
\>\>\texttt{intervals))}
\end{tabbing}
\caption{A \clowder\ program for the find-fix-verify workflow.}
\label{ffv}
\end{figure}

For even more examples of programs we can write, please refer to the Appendix.

\section{Conclusion}
We have presented \clowder, a system that provides a language for writing decision processes that provides an abstraction over POMDPs. Knowledge of POMDPs is not a prerequisite for being able to use decision-theory in everyday applications. In particular, the states and dynamics of POMDPs are hidden from users. We have shown how crowdsourcing experts can use \clowder\ to build and optimally control many of their workflows. We have also implemented \clowder\ and conducted a proof-of-concept experiment that shows that \clowder\ can run the voting program of Figure \ref{single-vote} and achieve results comparable to an expert-written problem-specific POMDP. The complete \clowder\ system that we have built will be available at the authors' websites.

\section{Future Work}
Our work on \clowder\ is just the beginning. We imagine many future directions:

1) A key question is whether or not \clowder\ is easy to use by people who are not planning experts. A comprehensive answer to this question requires a user study of our complete system.

2) \clowder\ does not allow access to underlying POMDP details through its language. But ideally, we would like to allow users to modify whatever aspects they understand (\eg, a subset of state variables or costs). We imagine an extension of \clowder\ that allows users to work with POMDP/primitive components in their programs.

3) \clowder\ allows users to specify \emph{hard constraints} on policies. By writing an adaptive program, they are exactly specifying the policies that may be chosen by a planner. In other words, they not only specify a POMDP, but they also specify a partial policy on that POMDP, by limiting the actions that can be taken in a given state. For example, in the voting program (Figure \ref{single-vote}), if an agent decides to stop asking the crowd for more votes, it can only return the answer that received more votes. It is not allowed to return the answer that received fewer votes. We imagine a non-trivial extension of \clowder\ that allows users to specify \emph{soft constraints}, in the same way that UCT allows users to specify a rollout policy to guide search. In this framework, \clowder\ would be able to deviate from the program. For example, instead of always returning the answer with more votes, it might return the answer with fewer votes, because maybe the user did not foresee that sometimes the answer with fewer votes is more likely to be correct.

4) \clowder\ assumes that all the primitives that use the same variable specify the same state space for that variable. Such a restriction makes life more difficult for non-experts. In particular, this assumption  may lead to unforeseen crashing or unexpected behavior. However, we envision at least two methods for ameliorating these scenarios. The first is to type our language, making it impossible to write programs that would crash the compiler or planner. The second is to use polymorphic typing or subtyping so that primitives are more flexible. 

5) Solving large POMDPs is a hard problem, and \clowder\ creates large POMDPs. The scalability of our system is a weakness that we hope to address. One way to reduce the size of the POMDPs that \clowder\ creates is to use state abstraction. If we can analyze the programs to determine the states that are irrelevant for making decisions, we can eliminate them and significantly increase the size of the programs that we can write.  

6) While a procedural Lisp-like language is easy for us to compile and interpret, we believe that  most users prefer a more imperative C-like language. Converting the \clowder\ language to one with a more familiar syntax should increase usability and adoption.

7) Finally, \clowder\ assumes that experts either know, or can write down the models for their primitives. However, we can easily extend the language to allow experts to direct \clowder\ to learn the models using reinforcement learning, thereby reducing the amount of work that experts need to put into the system.


\bibliographystyle{abbrv}
\bibliography{LinMausamWeld}

\begin{thebibliography}{10}

\bibitem{andre-nips01}
D.~Andre and S.~J. Russell.
\newblock Programmable reinforcement learning agents.
\newblock In {\em NIPS}, 2001.

\bibitem{andre-aaai02}
D.~Andre and S.~J. Russell.
\newblock State abstraction for programmable reinforcement learning agents.
\newblock In {\em AAAI}, 2002.

\bibitem{barto-ai95}
A.~G. Barto, S.~J. Bradtke, and S.~P. Singh.
\newblock Learning to act using real-time dynamic programming.
\newblock {\em Artificial Intelligence}, 72:81--138, 1995.

\bibitem{bernstein10}
M.~S. Bernstein, G.~Little, R.~C. Miller, B.~Hartmann, M.~S. Ackerman, D.~R.
  Karger, D.~Crowell, and K.~Panovich.
\newblock Soylent: A word processor with a crowd inside.
\newblock In {\em UIST}, 2010.

\bibitem{boutilier-aaai00}
C.~Boutilier, R.~Reiter, M.~Soutchanski, and S.~Thrun.
\newblock Decision-theoretic, high-level agent programming in the situation
  calculus.
\newblock In {\em AAAI}, 2000.

\bibitem{chajewska-aaai00}
U.~Chajewska, D.~Koller, and R.~Parr.
\newblock Making rational decisions using adaptive utility elicitation.
\newblock In {\em AAAI}, 2000.

\bibitem{dai-aij13}
P.~Dai, C.~H. Lin, Mausam, and D.~S. Weld.
\newblock Pomdp-based control of workflows for crowdsourcing.
\newblock {\em Artificial Intelligence}, 202:52--85, 2013.

\bibitem{dai-aaai10}
P.~Dai, Mausam, and D.~S. Weld.
\newblock Decision-theoretic control of crowd-sourced workflows.
\newblock In {\em AAAI}, 2010.

\bibitem{dietterich-jair00}
T.~G. Dietterich.
\newblock Hierarchical reinforcement learning with the maxq value function
  decomposition.
\newblock {\em Journal of Artificial Intelligence Research}, 13:227--303, 2000.

\bibitem{givan-aij03}
R.~Givan, T.~Dean, and M.~Greig.
\newblock Equivalence notions and model minimization in markov decision
  processes.
\newblock {\em Artificial Intelligence}, 147:163--223, 2003.

\bibitem{kamar-aamas12}
E.~Kamar, S.~Hacker, and E.~Horvitz.
\newblock Combining human and machine intelligence in large-scale
  crowdsourcing.
\newblock In {\em AAMAS}, 2012.

\bibitem{lin-aaai12}
C.~H. Lin, Mausam, and D.~S. Weld.
\newblock Dynamically switching between synergistic workflows for
  crowdsourcing.
\newblock In {\em AAAI}, 2012.

\bibitem{little09}
G.~Little, L.~B. Chilton, M.~Goldman, and R.~C. Miller.
\newblock Turkit: tools for iterative tasks on mechanical turk.
\newblock In {\em KDD-HCOMP}, pages 29--30, 2009.

\bibitem{marthi-ijcai05}
B.~Marthi, S.~Russell, D.~Latham, and C.~Guestrin.
\newblock Concurrent hierarchical reinforcement learning.
\newblock In {\em IJCAI}, 2005.

\bibitem{mcallester-99}
D.~McAllester.
\newblock Bellman equations for stochastic programs, 1999.

\bibitem{milnewitten08}
D.~Milne and I.~H. Witten.
\newblock Learning to link with wikipedia.
\newblock In {\em Proceedings of the ACM Conference on Information and
  Knowledge Management}, 2008.

\bibitem{parr-nips98}
R.~Parr and S.~Russell.
\newblock Reinforcement learning with hierarachies of machines.
\newblock In {\em NIPS}, 1998.

\bibitem{pinto-icmla10}
J.~Pinto, A.~Fern, T.~Bauer, and M.~Erwig.
\newblock Robust learning for adaptive programs by leveraging program
  structure.
\newblock In {\em ICMLA}, 2010.

\bibitem{ratinov11}
L.~Ratinov, D.~Roth, D.~Downey, and M.~Anderson.
\newblock Local and global algorithms for disambiguation to wikipedia.
\newblock In {\em Proceedings of the Annual Meeting of the Association of
  Computational Linguistics}, 2011.

\bibitem{sanner11}
S.~Sanner.
\newblock Relational dynamic influence diagram language (rddl): Language
  description.
\newblock Technical report, NICTA and the Australian National University, 2011.

\bibitem{silver-nips10}
D.~Silver and J.~Veness.
\newblock Monte-carlo planning in large pomdps.
\newblock In {\em NIPS}, 2010.

\bibitem{simpkins-oopsla08}
C.~Simpkins, S.~Bhat, C.~I. Jr., and M.~Mateas.
\newblock Towards adaptive programming: Integrating reinforcement learning into
  a programming language.
\newblock In {\em OOPSLA}, 2008.

\bibitem{weld-hcomp11}
D.~S. Weld, Mausam, and P.~Dai.
\newblock Human intelligence needs artificial intelligence.
\newblock In {\em HCOMP}, 2011.

\bibitem{younes-ipc04}
H.~L.~S. Younes and M.~L. Littman.
\newblock Ppddl1.0: The language for the probabilistic part of ipc-4.
\newblock In {\em IPC}, 2004.

\end{thebibliography}
\appendix
We have shown how to write a program that polls workers in order to find the best answer to some question. However, requesters can do better by asking the question in multiple ways \cite{lin-aaai12}. Figure \ref{multiple-vote} shows how to write a voting program if you have two methods for asking the same question.

While the primary goal of \clowder\ is to enable non-experts to write POMDPs, experts can also use \clowder\ to quickly build large and complex POMDPs. Figure \ref{rocksample} shows how one can use \clowder\ to write a goal-based \emph{rocksample}. We note that the program we write constrains the possible policies to ones that are more likely to be optimal (though it may not include the most optimal policy).

\begin{figure}[H]
\begin{tabbing}
\noindent \texttt{(d}\=\texttt{efine} \texttt{(m-vote q0 q1 a0 a1 c0 c1)}\\
\> \texttt{(c}\=\texttt{hoose}\\
\>\>\texttt{(if} \= \texttt{(crowd-vote q0 a0 a1))}\\
\>\>\>\texttt{(m-vote q0 q1 a0 a1 (+ c0 1) c1)}\\
\>\>\>\texttt{(m-vote q0 q1 a0 a1 c0 (+ c1 1)))}\\
\>\>\texttt{(if} \= \texttt{(crowd-vote q1 a0 a1))}\\
\>\>\>\texttt{(m-vote q0 q1 a0 a1 (+ c0 1) c1)}\\
\>\>\>\texttt{(m-vote q0 q1 a0 a1 c0 (+ c1 1)))}\\
\>\>\texttt{(if (> c0 c1) \#t \#f)))}\\
\end{tabbing}
\vspace{-20px}
\caption{A \clowder\ program for multiple workflows. \texttt{q0,q1} are the two ways of asking the same question, \texttt{a0,a1} are the two possible answers, and \texttt{c0,c1} count the number of votes for each choice.}
\label{multiple-vote}
\end{figure}
\vspace{-10px}
\begin{figure}[H]
\begin{tabbing}
\noindent \texttt{(d}\=\texttt{efine (move start end)}\\
\> \texttt{(if}\=\ \texttt{(= start end)} \\
\>\> \texttt{end}\\
\>\> \texttt{(choose}\= \texttt{(move (move-north start) end)}\\
\>\>\>\texttt{(move (move-south start) end)}\\
\>\>\>\texttt{(move (move-east start) end)}\\
\>\>\>\texttt{(move (move-west start) end))))}\\
\end{tabbing}
\begin{tabbing}
\noindent \texttt{(d}\=\texttt{efine}\texttt{ (r-s pos rocks exit-pos)}\\
\>    \texttt{(c}\=\texttt{hoose}\\
\>\> \texttt{(move pos exit-pos)}\\
\>\> \texttt{(let} \= \texttt{ ((good-rock (find-good-rock rocks)))}\\
\>\>\> \texttt{(r-s }\= \texttt{(sample (move pos good-rock))}\\
\>\>\>\> \texttt{(remove good-rock rocks) }\\
\>\>\>\> \texttt{exit-pos))))}\\
\end{tabbing}
\vspace{-20px}
\caption{A \clowder\ program for \emph{rocksample}. \texttt{pos} is the initial position, \texttt{rocks} is a list of rocks, and \texttt{exit-pos} is the exit position.}
\label{rocksample}
\end{figure}

\texttt{move-*} and \texttt{sample} are the primitives that need to expertly-defined. These definitions boostrap the creation of the POMDP. For example, The \clowder\ value of each rock can be defined by the primitives as a pair, where the first element is a binary indicator of whether or not the rock is good, and the second element is the position of the rock. The Normal value of each rock can also be a pair, where the first element is a rock id and the second element is its position. Then, the behavior of the \clowder\ values of the return values of these primitives (and thereby all subexpressions that use the return values), are given by the expert-defined transition probabilities. \texttt{find-good-rock} is a user-defined function (not shown) that can be viewed as a generalization of the voting program. 

\texttt{r-s} is the \emph{rocksample} program. It contains two choice points. The first choice is to simply move to the exit. The second choice is to first find a good rock, move to where the good rock is, sample it (a primitive), remove the rock from our list of rocks, and recurse.  

To define the policy, we can write a goal elicitation module that asks the user how many rocks should be sampled before quitting. For example, if the user specifies that all rocks be sampled, then the agent should find the expected minimum cost policy to sample all the rocks, where the costs are given by the primitives.

\end{document}